\documentclass[letterpaper, 10 pt, conference]{ieeeconf}  

\IEEEoverridecommandlockouts                              
\usepackage{graphicx} 
\usepackage{amsmath} 
\usepackage{amssymb}  
\usepackage{url}
\usepackage{color}
\usepackage{mathtools}
\mathtoolsset{showonlyrefs}
\newtheorem{theorem}{Theorem}

\newtheorem{remark}{Remark}
\newtheorem{proposition}{Proposition}

\usepackage{algorithm}
\usepackage{algpseudocode}
\usepackage[table]{xcolor}
\usepackage{colortbl}
\title{\LARGE \bf
Adversarial Robustness of Deep State Space Models for Forecasting
}
\author{Sribalaji C. Anand and George J. Pappas
\thanks{This research is supported by the Swedish Research Council grant 2024-00185. The authors are with the Department of Electrical and Systems Engineering, University of Pennsylvania, United States (email: \{sri03,pappasg\}@seas.upenn.edu). Sribalaji C. Anand is also affiliated with KTH Royal Institute of Technology, Sweden (email: srca@kth.se).}%
}
\begin{document}
\maketitle
\thispagestyle{empty}
\pagestyle{empty}
%
%
\begin{abstract}
State-space model (SSM) for time-series forecasting have demonstrated strong empirical performance on benchmark datasets, yet their robustness under adversarial perturbations is poorly understood. We address this gap through a control-theoretic lens, focusing on the recently proposed \emph{Spacetime} SSM forecaster. We first establish that the decoder-only \emph{Spacetime} architecture can represent the optimal Kalman predictor when the underlying data-generating process is autoregressive - a property no other SSM possesses. Building on this, we formulate robust forecaster design as a Stackelberg game against worst-case stealthy adversaries constrained by a detection budget, and solve it via adversarial training. We derive closed-form bounds on adversarial forecasting error that expose how open-loop instability, closed-loop instability, and decoder state dimension each amplify vulnerability - offering actionable principles towards robust forecaster design. Finally, we show that even adversaries with no access to the forecaster can nonetheless construct effective attacks by exploiting the model's locally linear input-output behavior, bypassing gradient computations entirely. Experiments on the Monash benchmark datasets highlight that model-free attacks, without any gradient computation, can cause at least $33\%$ more error than projected gradient descent with a small step size. 
\end{abstract}
%
%
\section{Introduction}
Time series modeling (TSM) is a well-established problem that requires models to efficiently forecast over long horizons and finds applications in diverse domains including finance~\cite{andersen2003modeling}, power systems~\cite{amini2016arima}, climate science~\cite{mudelsee2019trend}, among others. With increasing data availability and computational power, purely data-driven and machine learning-aided TSM has become an active research area~\cite{jin2024survey}.

Various machine learning architectures have been adopted in the literature for effective TSM, including Convolutional/Recurrent Neural Networks (CNN/RNN)~\cite{bai2018empirical}, 
Transformers~\cite{das2024decoder}, and deep state-space models (SSMs)~\cite{rangapuram2018deep}. Although there has been a surge of Transformer-based solutions for TSM, empirical evidence suggests that a simple one-layer neural network outperforms sophisticated Transformer-based models, often by a large margin~\cite{zeng2023transformers}.

In parallel, a deep SSM called \emph{Spacetime}~\cite{zhang2023effectively} was recently proposed for TSM and shown to outperform neural networks and Transformers on benchmark datasets (see \cite[Table~1]{zhang2023effectively}). \emph{Spacetime} was developed based on the SSM in~\cite{gu2021efficiently}, which was the precursor to the well-known language model \emph{Mamba}~\cite{gu2024mamba}. In this paper, we propose for the first time a framework to robustify the \emph{Spacetime} model against stealthy adversaries. In particular, the contributions are:
\begin{enumerate}
\item We establish that when the underlying data-generating mechanism is autoregressive, the decoder-only \emph{Spacetime} model can represent the optimal Kalman predictor under mild conditions (Proposition~\ref{prop:Kalman}).
\item We robustify the \emph{Spacetime} model against worst-case adversaries by formulating a robust optimization problem in \eqref{eq:problem} and solving it via adversarial training.
\item We quantify the forecasting error induced by adversarial perturbations and characterize its dependence on the \emph{Spacetime} model parameters, providing insights for robust forecaster design (Proposition~\ref{prop:error}).
\item We demonstrate that when adversaries lack access to the forecaster model, data-driven attacks can compromise forecasting performance with relative ease, highlighting the model's vulnerability (Theorem~\ref{thm:DDA}).
\item We validate our framework through experiments on the Monash benchmark time series datasets, depicting that: (a) detector-constrained adversarial training can yield up to $10\%$ reduction in adversarial MAE, and (b) model-free attacks can cause at least $33\%$ more MAE than projected gradient descent with a small step size.
\end{enumerate}

This paper is one of the first to study the robustness of SSM-based forecasting models. However, the impact of attacks on other forecasters has been studied in the literature. For instance, \cite{pmlr-v258-liu25l} underscores the impact of adversarial attacks on Transformer-based forecasters and proposes a gradient-free attack scheme based on model queries. The work~\cite{lin2024backtime} studies the impact of stealthy poisoning attacks and develops robust models through adversarial training. The paper~\cite{gallagher2022investigating} examines the effect of Fast Gradient Sign Method (FGSM) attacks on CNN models used for time series classification.
The paper~\cite{wu2022small} develops attacks on time series predictions using gradients, with extensions to constrained perturbation scenarios. The paper~\cite{zizzo2020adversarial} considers generating attacks against LSTM detectors.

While the aforementioned works focus on TSM, recent work has begun examining adversarial robustness of SSMs in other domains. The paper~\cite{qi2024exploring} analyzes SSMs under adversarial perturbations, concluding that input-dependent selective SSMs~\cite{gu2024mamba} may face the problem of error explosion. The effect of bit-flip attacks on SSMs is studied in~\cite{das2025rambo}, demonstrating that flipping a single critical bit can reduce accuracy from $74.64\%$ to $0\%$. Similarly, the vulnerability of visual SSMs against adversarial attacks was studied in~\cite{lee2024badvim}.

Thus, while substantial work exists on adversarial attacks in TSM and on SSM robustness in other domains, the robustness of SSM-based forecasters remains unexplored. Moreover, the problem has not been examined from a control-theoretic perspective, which is the focus of this paper.

The remainder of this paper is organized as follows. We formulate the problem in Section~\ref{sec:PF}. 
In Section~\ref{sec:methods}, we introduce the \emph{Spacetime} model and provide a control-theoretic analysis of the SSM.
In Section~\ref{sec:design}, we construct a robust forecaster, and provide a robustness analysis. 
In Section~\ref{sec:DDA}, we propose model-free attack strategies and we conclude the paper in Section~\ref{sec:con}.
Experimental validation on benchmark datasets is provided throughout the paper.

%
%
%
\section{Problem Formulation}\label{sec:PF}
In this section, we introduce the preliminaries and formulate the problem. A pictorial representation of the problem setup is given in Fig.~\ref{fig:problem}.
\begin{figure}
    \centering
    \includegraphics[width=7cm]{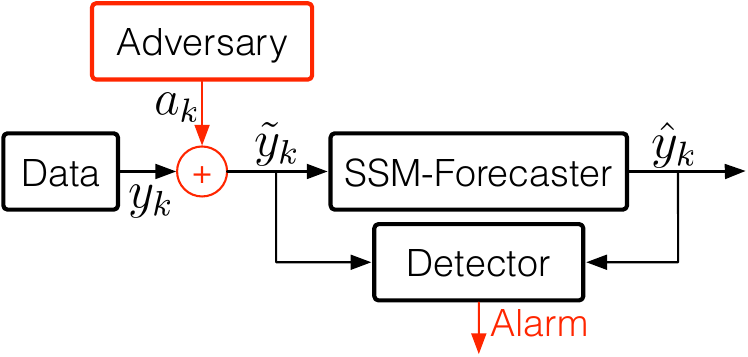}
    \vspace{-10pt}
    \caption{Problem setup: the adversary (red) injects attack signal into the data stream, producing corrupted input $\tilde{y}_k$ to the SSM-Forecaster. The detector uses $\tilde{y}_k$ and $\hat{y}_k$ to raise an alarm.}
    \vspace{-15pt}
    \label{fig:problem}
\end{figure}
\subsection{Data-generating mechanism and forecaster model} 
We consider a scalar time-series $y_k \in \mathbb{R}$, $k \in \mathbb{Z}_+$, where $y_k$ denotes the value at time step $k$, and $y$ denotes the entire sequence. 
Suppose that we have access to a large amount of attack-free historical data $\mathcal{I} \triangleq \left\{y_k\right\}_{k=0}^N$, $N\gg 0$.
Using $\mathcal{I}$, our objective is to construct a forecaster of the form:
\begin{equation}\label{eq:f}
    \hat{y}_{k+1:k+h}^{(k)} = f\left({y}_{k-\ell+1:k}\right),
\end{equation}
where $h \geq 1$ is the forecasting horizon, $\ell \geq 1$ is the look-back window, $\hat{y}_{j}^{(k)}$ denotes the prediction for time $j$ made at time $k$, and $f$ denotes the forecaster obtained using $\mathcal{I}$. 
%
\subsection{Attack scenario} 
During runtime, the input to the forecaster may be corrupted by malicious adversaries. Specifically, we consider that the input to the forecaster is corrupted as:
\begin{equation}\label{eq:attacks}
\tilde{y}_k = y_k + a_k,
\end{equation}
where $\tilde{y}_k$ is the attacked data received during runtime, $y_k$ is the true data, and $a_k$ is the attack signal injected by the adversary. For notational simplicity, we denote the attacked sequence as $\tilde{y} = y + a$, where $a$ is the attack sequence.
%
\subsection{Attack detector and false alarm rate} 
Although the forecaster may not know the attack magnitude or duration, to detect such attacks during runtime, an attack detector is employed as follows:
\begin{align}
\text{Detector:}& \begin{cases}
        z_k> \delta & \text{alarm}\\
        z_k\leq \delta & \text{No alarm}
    \end{cases},\label{eq:detector}\\
    z_k &= g(\tilde{y}_k,\bar{y}_k),\;\;\bar{y}_{k} = \displaystyle \frac{1}{h} \sum_{i=1}^{h} \hat{y}_{k}^{(k-i)}, \label{eq:att:pre}
\end{align}
where $\delta \in \mathbb{R}_+$ is the detection threshold, $z_k$ is the detection statistic, $\hat{y}$ are the predictions made using possibly attacked data $\tilde{y}$,
and $\bar{y}_k$ is the prediction average over $h$ different predicted values of $y_k$. Here, $\delta$ is a design parameter. If $\delta$ is small, the false alarm rate (FAR) will be high, which is detrimental. Similarly, if $\delta$ is large, the adversary can inject attacks of larger magnitudes that remain undetected. Thus, $\delta$ is designed to yield an acceptable FAR, denoted by $\alpha$. Here, the FAR is defined as $\text{FAR} = \mathbb{P}(z_k > \delta \mid a \equiv 0)$.
\begin{remark}
Suppose the data-generating mechanism is an LTI system. Let $h=1$, $\ell=1$, $z_k = \sigma_e^{-1} e_k^2$, $e_k = \tilde{y}_k-\bar{y}_k$, where $\sigma_e$ is the forecasting error variance. Then the detector \eqref{eq:detector} represents the $\chi^2$ detector. The threshold $\delta$ that yields a given FAR $\alpha$ (asymptotically) is given by \cite[(14)]{anand2025feasibility}. $\hfill \triangleleft$
\end{remark}
\subsection{Attacker knowledge, constraints, and objective}
In this paper, we assume that the adversary has access to $f(\cdot)$, $\delta$, $g(\cdot)$, $\alpha$, and $\mathcal{I}$. In general, the adversary may not have access to such information, but this assumption allows us to defend against the worst-case adversary. Suppose the adversary constructs an attack signal $a_k$ that does not increase the FAR $\alpha$, in which case the forecaster may not detect the presence of an adversary. We define such attack signals that do not raise the FAR as \emph{stealthy attacks}. Given $\alpha$, we denote the set of all stealthy attacks as $\mathcal{S}^{\alpha}$. 

We next consider an adversary injecting stealthy attacks to maximize the prediction error. Let $H \triangleq N-h-\ell+1$, then the attack policy can be obtained by solving:
\begin{equation}\label{opt:adversary}
\begin{aligned}
& \underset{a \in \mathcal{S}^{\alpha}}{\sup} \;\; \frac{1}{H} \sum_{j=\ell}^{N-h} Q_{f}(y_{j-\ell + 1:j+h}, a_{j-\ell+1:j})\\
&\scalebox{0.95}{$Q_{f}(y_{j-\ell + 1:j+h}, a_{j-\ell+1:j}) = \left\| f(\tilde{y}_{j-\ell+1:j}) - y_{j+1:j+h} \right\|_2^2,$}
\end{aligned}
\end{equation}
where $Q_f(y,a)$ is the squared error caused by a given attack vector $a$ against a forecaster $f(\cdot)$ with input $y$.
\subsection{Robust forecaster and problem definition}
To defend against the worst-case adversary in \eqref{opt:adversary}, we aim to construct a robust forecaster that reduces the mean squared error (MSE) of predictions in the presence of attacks. Such a robust forecaster can be obtained by solving:
\begin{equation}\label{eq:problem}
\begin{aligned}
    \underset{f \in \mathcal{F}}{\inf} \; \underset{a \in \mathcal{S}^{\alpha}}{\sup} \;\; \frac{1}{H} \sum_{j=\ell}^{N-h} Q_{f}(y_{j-\ell + 1:j+h}, a_{j-\ell+1:j}),
\end{aligned}
\end{equation}
where $\alpha \in (0,1)$ is the nominal FAR and $\mathcal{F}$ is the set of all forecasters which can be realized using the \emph{Spacetime} SSM (see next section for more details).

The forecaster design problem \eqref{eq:problem} can be interpreted as a zero-sum Stackelberg game where the forecaster is the leader and the adversary is the follower. The forecaster commits to a model $f(\cdot)$ first, anticipating the worst-case adversarial response. By designing the forecaster to minimize the MSE under this worst-case attack, we obtain a robust model that performs well even when the adversary best-responds to the deployed forecaster. The remainder of this paper aims to solve the optimization problem~\eqref{eq:problem}.
\begin{remark}\label{rem:energy}
We note that most adversarial attack formulations in machine learning assume that attack energy is norm-bounded \cite{madry2018towards,yoon2022robust}. 
However, in this paper, rather than constraining the attacker's energy budget, we assume the attacker remains stealthy with respect to measurable detector signals. From both a practical and worst-case attack formulation perspective, our problem formulation is more realistic: attackers in real-world scenarios are typically constrained by detectability rather than by arbitrary energy bounds. $\hfill \triangleleft$
\end{remark}
\section{Spacetime Model}\label{sec:methods}
In this section, we present a brief overview of the forecaster model, and provide a control-theoretic analysis of the model. We also introduce the benchmark dataset to depict the performance of the forecaster model.
\subsection{Forecaster model} 
As mentioned before, in this paper, we use the SSM-based forecaster \emph{Spacetime}. A detailed overview of the \emph{Spacetime} model can be found in \cite{zhang2023effectively}; however, we present a brief overview to keep the presentation self-contained with the help of a pictorial representation in Figure~\ref{fig:spacetime}.

\begin{figure}[t]
    \centering
    \includegraphics[width=6cm]{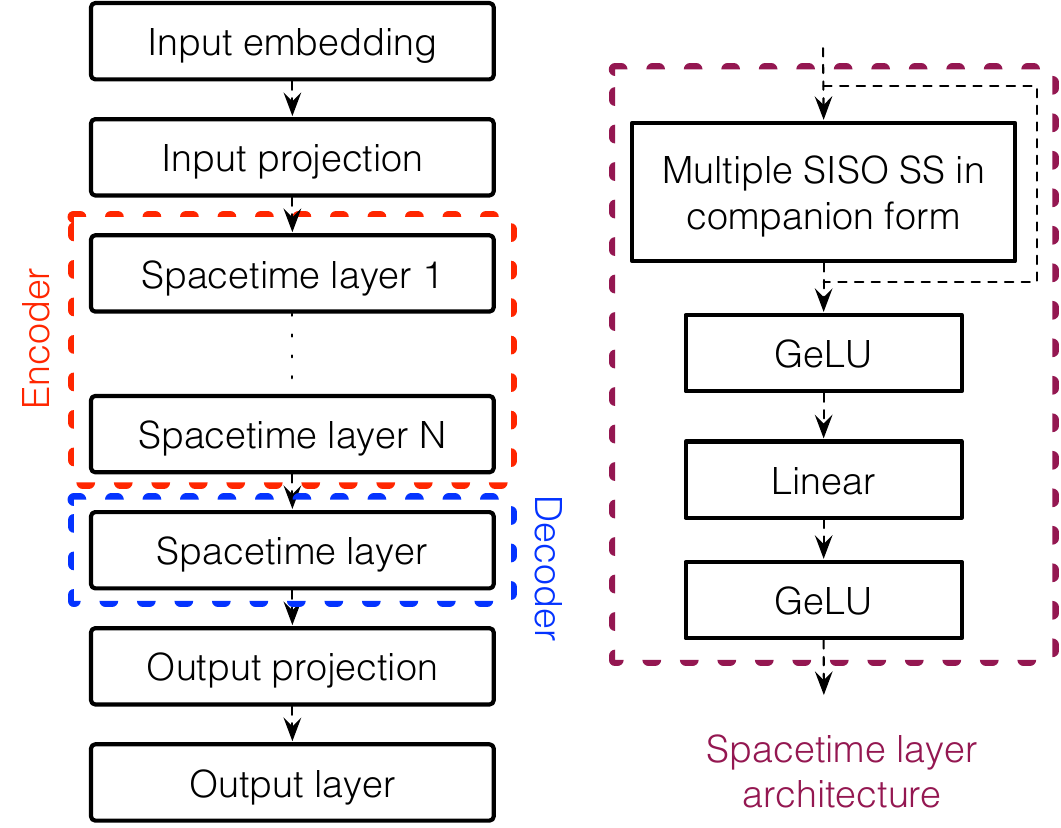}
    \caption{Spacetime architecture (left) and layer components (right). Here GeLU represents a Gaussian Error Linear Unit activation function, and Linear denotes a linear activation function.}
    \vspace{-15pt}
    \label{fig:spacetime}
\end{figure}

The \emph{Spacetime} model consists of an input embedding (to convert time-series data to vector representations), input projections (for dimension matching), a stack of \emph{Spacetime} layers (for encoding and decoding), an output projection (for dimension matching), and an output layer. Each \emph{Spacetime} layer comprises multiple Single-Input Single-Output (SISO) State-Space (SS) matrices in controllable canonical form with skip connections, whose outputs are mixed using a feed-forward network with GeLU activation function. Each encoder layer processes an input time series as a sequence-to-sequence map. The decoder layer takes the encoded sequence as input and outputs a predicted sequence. Unlike the encoder layers, which use skip connections, the decoder \emph{Spacetime} layer has no activation functions and no skip connections.

A key advantage of the \emph{Spacetime} model is its ability to predict its inputs (in the embedded domain) alongside the outputs (see \cite[(15)]{zhang2023effectively}). This enables the model to recurrently generate its own future inputs at inference time, leading to auto-regressive predictions without being constrained to fixed-horizon predictions~\cite{zeng2023transformers}.

Thus we now assume access to an oracle \emph{Spacetime} that produces a forecaster $f(\cdot) = \text{\emph{Spacetime}}(\mathcal{I},\cdot)$.
We next depict the efficacy of the \emph{Spacetime} model.
\begin{remark}\label{rem:zero_dynamics}
The detector-constrained formulation in~\eqref{eq:problem} admits stronger attacks than energy-constrained formulations. To see this, consider that the weights of the MLP block are chosen so that they represent the identity function, so that the \emph{Spacetime} model reduces to a linear map. If the encoder possesses unstable zeros, a zero dynamics attack~\cite{teixeira2015secure} can drive the internal states to large magnitudes while keeping the encoder output (and hence the detection statistic) arbitrarily small. Once the states grow sufficiently large causing bit overflow or the attack stops, the forecasting output degrades. The formulation in~\eqref{eq:problem} naturally accommodates such attacks, whereas such attacks violate a finite energy bound and are infeasible under energy-constrained formulations. 
$\hfill \triangleleft$
\end{remark}
\subsection{Control-theoretic analysis of the \emph{Spacetime} model}
In this section, we first present a result emphasizing the efficacy of the \emph{Spacetime} model. To this end, we recall a result from \cite{zeng2023transformers}.
\begin{proposition}\label{prop:spacetime:benefit}
Let the underlying data-generating mechanism be a noiseless auto-regressive (AR) process:
\begin{equation}\label{eq:prop}
    y_k = \sum_{i=1}^{p} \phi_i y_{k-i}, \quad p \in \mathbb{Z}_+, \quad p \geq 1.
\end{equation}
Then no class of Linear SSMs~\cite{gu2021efficiently}, except \emph{Spacetime}, can exactly represent \eqref{eq:prop}. $\hfill \square$
\end{proposition}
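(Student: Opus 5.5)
The plan has two parts. First, show that the \emph{Spacetime} companion-matrix parameterization realizes \eqref{eq:prop} exactly. Second, show that the other linear SSM classes of~\cite{gu2021efficiently}, namely S4 with diagonal-plus-low-rank or diagonal state matrices, cannot.

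\textbf{Spacetime realizes the AR process.} Write the state as $x_k = [y_{k-1},\dots,y_{k-p}]^\top$ and take
\begin{equation}
x_{k+1} = A x_k, \quad A = \begin{bmatrix} \phi_1 & \phi_2 & \cdots & \phi_p \\ 1 & 0 & \cdots & 0 \\ \vdots & \ddots & \ddots & \vdots \\ 0 & \cdots & 1 & 0\end{bmatrix}, \quad y_k = [\phi_1,\dots,\phi_p]\, x_k .
\end{equation}
This $A$ is exactly a companion matrix, i.e.\ the controllable-canonical form used in every \emph{Spacetime} SISO block, with free last-row parameters. The MLP is set to the identity and the embeddings and projections to identity or scalar maps. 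The closed-loop (autoregressive) decoder mode of \cite[(15)]{zhang2023effectively} feeds predictions back as inputs. Together these reproduce \eqref{eq:prop} for every horizon $h$ once the look-back satisfies $\ell\ge p$. I would check that the skip connection in the encoder layers can be disabled or absorbed by using the decoder-only path, as the statement's comparison suggests.

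\textbf{Other linear SSMs cannot.} The approach is to argue through spectral structure. S4/S4D parameterize $A$ as diagonal or normal-plus-low-rank. They are discretized from a continuous-time system via bilinear or ZOH, and are typically constrained to be stable, i.e.\ to have negative real-part eigenvalues. Meanwhile, \eqref{eq:prop} with generic $\phi$ has a companion matrix that can have:
\begin{itemize}
\item eigenvalues on or outside the unit circle (e.g.\ $p=1$, $\phi_1=1$, the random walk); or
\item repeated eigenvalues with nontrivial Jordan blocks (e.g.\ $y_k = 2y_{k-1}-y_{k-2}$, the linear trend).
\end{itemize}
A diagonal $A$ cannot represent a Jordan block. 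A stable discretized continuous-time $A$ has spectrum strictly inside the unit disk and cannot represent a marginally stable or unstable mode. In addition, a convolutional/non-recurrent output of length $\ell$ from such models cannot reproduce the infinite-horizon sequence. It suffices to exhibit one AR instance that fails for each such class, since ``exactly represent \eqref{eq:prop}'' should mean for all admissible $\phi$. This is the result recalled from \cite{zeng2023transformers}, so I would mostly cite it and supply these counterexamples.

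\textbf{Main obstacle.} The difficulty is making ``no class of Linear SSMs'' precise, since it requires fixing exactly which parameterization constraints define each class. I would first state the classes explicitly (S4 with NPLR plus stability, S4D diagonal, and so on). For each I would give a minimal counterexample: the random walk against stability constraints, and the repeated-root AR(2) against diagonalizability. The harder sub-case is diagonal complex SSMs that do allow unit-modulus eigenvalues. There I would rely on the Jordan-block example together with the fact that the input-to-output map over a finite look-back window (without recurrent self-prediction) cannot generate the prediction for arbitrary $h$.
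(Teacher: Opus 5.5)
Your first part is fine: with $\bar A=S$ (the shift), $\bar B=e_1$, $\bar K=[\phi_1\ \cdots\ \phi_p]$, the closed-loop decoder matrix $\bar A+\bar B\bar K$ is exactly your companion matrix. The second part, however, targets the wrong obstruction.

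The layers of \cite{gu2021efficiently} are input--output maps, i.e.\ causal convolutions with kernel $K_i=\bar C\bar A^{i}\bar B$. Representing \eqref{eq:prop} means realizing the predictor $u\mapsto\sum_{i=1}^{p}\phi_i u_{k-i}$, whose kernel is the finite sequence $(\phi_1,\dots,\phi_p,0,0,\dots)$. It does not mean generating the trajectory autonomously. This predictor needs neither a marginally stable mode nor a Jordan block at $1$. For the random walk it is a pure one-step delay, and for $y_k=2y_{k-1}-y_{k-2}$ it is the stable FIR filter $2u_{k-1}-u_{k-2}$. So stability constraints and non-diagonalizability are not why these classes fail.

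Conversely, under your autonomous reading, a strictly stable complex-diagonal model does generate every AR($p$) whose characteristic roots are distinct and lie in the open unit disk: take $\bar A=\mathrm{diag}(e^{\Delta\lambda_j})$ with the $e^{\Delta\lambda_j}$ equal to those roots. Your argument therefore only excludes a measure-zero set of coefficients and relies entirely on the ``for all $\phi$'' reading. It also closes the remaining sub-case with a finite-window remark that is not an argument, since \emph{Spacetime} also uses a finite window.

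The missing idea is that the kernel $(\phi_1,\dots,\phi_p,0,\dots)$ has transfer function $\sum_i\phi_i z^{-i}$ with all poles at the origin. It therefore requires a nilpotent, shift-register state matrix. \emph{Spacetime} has one: its companion matrix $S+a\,e_p^\top$ with $a=0$ is the shift $S$, and $x_{k+1}=Sx_k+e_1u_k$, $y_k=[\phi_1\ \cdots\ \phi_p]\,x_k$ realizes the predictor exactly.

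A ZOH-discretized continuous-time model instead has $\bar A=e^{\Delta A}$, which is invertible whatever the structure of $A$. The kernel then obeys the Cayley--Hamilton recurrence of $\bar A$, whose constant coefficient $\pm\det\bar A$ is nonzero. If $K_i=0$ for all $i>p$, running this recurrence backwards forces $K_i=0$ for every $i$, contradicting $\phi\neq0$. This excludes every nonzero $\phi$, independently of stability, diagonal, or NPLR constraints, and that is the actual content of the statement. Under the bilinear transform, $\bar A$ is singular exactly when $A$ has the eigenvalue $-2/\Delta$, so there the argument must also use the structural restrictions of the class.

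Finally, the paper gives no proof of its own and only recalls the statement. The result originates in the \emph{Spacetime} paper \cite{zhang2023effectively}, not in \cite{zeng2023transformers}, so citing the latter will not supply the argument.
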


Proposition~\ref{prop:spacetime:benefit} states that only \emph{Spacetime} can accurately represent an AR process, which is a common model for time-series data~\cite{box2015time}. Next, we show that the \emph{Spacetime} model can represent an optimal predictor for the autoregressive system \eqref{eq:prop}, thanks to its inherent linear structure. To show this optimality, we first rewrite \eqref{eq:prop} as:
\begin{equation}
\bar{x}_{k+1} = A\bar{x}_k,\; y_k = C \bar{x}_k, \;  A \triangleq 
\begin{bmatrix}
\phi_1 & \phi_{2}  & \cdots & \phi_p\\
 1 & 0 &\cdots & 0\\
 \vdots & \ddots & \ddots & \vdots\\
 0 & \cdots & 1 & 0
\end{bmatrix}
\end{equation}
where $C \triangleq \begin{bmatrix}
1 & 0  & \cdots & 0
\end{bmatrix}$, $\bar{x}_k = \begin{bmatrix}
    y_{k-1} & \dots & y_{k-p}
\end{bmatrix}^\top$.

For a stable system ($\rho(A)<1$), the steady-state optimal one-step-ahead predictor is the Kalman predictor. We next show that the decoder-only \emph{Spacetime} model can represent any Luenberger-type observer, which includes the optimal Kalman predictor as a special case. 
\begin{proposition}\label{prop:Kalman}
Let the data-generating mechanism be:
\begin{equation}\label{eq:prop:2}
\begin{aligned}
    \bar{x}_{k+1} &= A\bar{x}_k + \omega_k, \quad \omega_k \sim \mathcal{N}(0,\Sigma_w)\\
    y_k &= C\bar{x}_k + v_k, \quad v_k \sim \mathcal{N}(0,\Sigma_v),
\end{aligned}
\end{equation}
where $\rho(A) < 1$ and the pair $(A,C)$ is observable. Consider a steady-state observer making one-step-ahead predictions:
\begin{equation}\label{eq:Luen:O}
    \hat{x}^o_{k+1} = A\hat{x}^o_{k} + L(y_k - C\hat{x}^o_k), \quad \hat{y}^o_{k+1} = C\hat{x}^o_{k+1},
\end{equation}
where $\hat{x}^o_{k+1}$ and $\hat{y}^o_{k+1}$ denote the predicted state and output, respectively, and $L$ is the observer gain. Consider a decoder-only \emph{Spacetime} model, where the weights of the MLP block are chosen so that they represent the identity function, making one-step-ahead predictions: 
\begin{equation}\label{eq:ST:O}
    \hat{y}^s_{k+1} = f(u_k^s)
\end{equation}
where 
$\hat{y}^s_{k+1}$ is the predicted output, and $u_k^s = y_k$ is the input sequence.
If the pair $(A, L)$ is controllable, then there exist a spacetime model $f(\cdot)$
such that $\|\hat{y}^s_{k+1} - \hat{y}^o_{k+1}\| \leq \epsilon$ for arbitrarily small $\epsilon > 0$.
\end{proposition}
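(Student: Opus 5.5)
The plan is to rewrite the observer \eqref{eq:Luen:O} as a SISO LTI system driven by $y_k$, and then realize that same system with a single decoder \emph{Spacetime} layer. Substituting the measurement into the observer gives
\begin{equation}
\hat{x}^o_{k+1} = (A - LC)\hat{x}^o_k + L y_k, \qquad \hat{y}^o_{k+1} = C\hat{x}^o_{k+1} = C(A-LC)\hat{x}^o_k + CL\, y_k .
\end{equation}
This is an LTI system with input $y_k$, state matrix $A_o \triangleq A - LC$, input matrix $L$, output matrix $C A_o$, and feedthrough $CL$. Its transfer function from $y$ to $\hat{y}^o$ is the scalar rational function
\begin{equation}
G(z) = C A_o (zI - A_o)^{-1} L + CL .
\end{equation}
By construction, a decoder-only \emph{Spacetime} model whose MLP block is the identity, with no skip connections or activations, is a linear map. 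Its layer is a SISO state-space system in controllable canonical form, with free parameters $(a_1,\dots,a_n)$ in the companion matrix, the output row $c$, and a feedthrough $d$ that can be absorbed into the input/output projections if it is not native. The goal is therefore to show that $G(z)$ is realizable in this form.

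\textbf{Key steps.} First, use controllability of $(A_o, L)$. The pair $(A, L)$ is controllable by hypothesis, and output-injection-type arguments suggest $(A-LC, L)$ is controllable as well: the feedback $A \mapsto A - LC = A + L(-C)$ is a state feedback through the input matrix $L$, and state feedback preserves controllability. Second, with $(A_o, L)$ controllable, there is a similarity transformation $T$ with $T A_o T^{-1}$ in controllable canonical (companion) form and $T L = e_p$. Third, set the \emph{Spacetime} decoder matrices to $A_s = T A_o T^{-1}$, $B_s = e_p$, $C_s = C A_o T^{-1}$, $D_s = CL$, with zero initial state. The decoder output then equals $\hat{y}^o_{k+1}$ exactly when $\hat{x}^o_0 = 0$. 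Fourth, handle a nonzero observer initial condition. Since $\rho(A - LC) < 1$ for a sensible (e.g.\ Kalman) gain, the mismatch $C A_o^{k+1} T^{-1}(\cdot)$ decays geometrically. The discrepancy is therefore below any $\epsilon$ after finitely many steps, or can be made zero by matching the initial state. The same slack absorbs any parametrization gap, for example an embedding that realizes the identity only approximately.

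\textbf{Main obstacle.} The delicate step is the mapping between the \emph{Spacetime} layer's actual parametrization and the realization above. The \emph{Spacetime} SSM may fix $B$ to a canonical vector and restrict how the feedthrough and the multi-dimensional embedding/projections enter. This is where the $\epsilon$ in the statement comes from: if the identity-MLP or embedding is only approximately linear, e.g.\ GeLU-based, I would argue by continuity that the composite map can be made uniformly $\epsilon$-close to the linear one on bounded input sets. Boundedness holds because $\rho(A)<1$ keeps $y_k$ bounded in probability. The error then propagates through the stable filter with a gain bounded by $\sum_k \|C A_o^{k+1} T^{-1}\|\,\|e_p\|<\infty$. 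I would first try to show the exact linear realization, and then invoke this approximation argument only for the nonlinear blocks.
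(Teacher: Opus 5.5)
Your proposal is correct and follows essentially the paper's proof. Both arguments realize the observer as the SISO system $(A-LC, L)$ driven by $y_k$, show that $(A-LC,L)=(A+L(-C),L)$ inherits controllability from $(A,L)$ (you cite invariance of controllability under state feedback, the paper runs the equivalent PBH contradiction), put it in controllable canonical form, and match initial states to obtain exact equality, so the $\epsilon$-approximation discussion in your last paragraph is not needed.
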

\begin{proof}
For a decoder-only \emph{spacetime} model, under the stated assumptions, the predictions can be written as: $\hat{x}^s_{k+1} = A^s\hat{x}^s_{k} + B^su_k^s$, $\hat{y}^s_{k+1} = C^s\hat{x}^s_{k+1}$, where $\hat{x}^s_{k}$ is the decoder state. The proof then follows by showing that there exist matrices $(A^s, B^s, C^s)$ and an initial condition $\hat{x}^s_0$ such that $\hat{y}^s_{k+1} = \hat{y}^o_{k+1}$. To this end, let $A^s = A - LC$, $B^s = L$, $C^s = C$, and $\hat{x}^s_{0} = \hat{x}^o_{0}$. With this choice, the dynamics in \eqref{eq:ST:O} and \eqref{eq:Luen:O} become identical, satisfying $\hat{y}^s_{k+1} = \hat{y}^o_{k+1}$. For the dynamics to be realizable by the decoder-only \emph{Spacetime} model, we must show that the matrices $A^s$, $B^s$, $C^s$ can be represented in controllable canonical form. Since the time series is scalar, the state-space system in \eqref{eq:ST:O} is SISO. For a SISO system, the matrices can be represented in controllable canonical form if and only if the pair $(A^s, B^s) = (A - LC, L)$ is controllable.

We prove that $(A - LC, L)$ is controllable by contradiction. Assume $(A, L)$ is controllable but $(A - LC, L)$ is not controllable. Then there exists $v \neq 0$ and $\lambda \in \mathbb{C}$ such that
\begin{equation}\label{eq:2}
    \scalebox{0.88}{$v^\top \begin{bmatrix} \lambda I - (A - LC) & L \end{bmatrix} = 0 \implies v^\top \begin{bmatrix} \lambda I - A & L \end{bmatrix} = 0$}
\end{equation}
where the implication follows since $v^\top L = 0$ implies $v^\top LC = 0$. Condition~\eqref{eq:2} holds if and only if $(A, L)$ is not controllable, contradicting our assumption. Therefore, $(A - LC, L)$ is controllable, completing the proof.
\end{proof}
\begin{remark}
The controllability condition in Proposition~\ref{prop:Kalman} is mild in practice. For instance, an AR($3$) process with $\phi_1 = 0.3$, $\phi_2 = 0.5$, $\phi_3 = 0.2$, $\Sigma_v = 0.1$, and $\Sigma_w = 10^{-2}I_3$, has a Kalman gain $K = [0.16, 0.20, 0.17]^\top$, and one can immediately verify that $(A, K)$ is controllable. $\hfill\triangleleft$
\end{remark}

Thus, we have shown that the decoder-only \emph{Spacetime} model can represent any Luenberger-type predictor, including the steady-state optimal Kalman predictor. In comparison, a transformer architecture can represent a Kalman filter~\cite{goel2024can} up to a small additive error that is bounded uniformly in time. Our result establishes that the \emph{Spacetime} architecture is optimal in a well-defined sense; namely, it can represent the best possible linear predictor for autoregressive data-generating processes. We next depict the performance of the model using a benchmark dataset.

\subsection{Experiments}\label{dataset:introduce}
In this section, we demonstrate the efficacy of the \emph{Spacetime} model using a benchmark dataset. In particular, we use the electricity consumption dataset from~\cite{electricityloaddiagrams}, which comprises hourly electricity consumption measurements (in kW) from $321$ clients spanning the period from 2012 to 2014 ($26,304$ data points per client). We utilize the curated version of this dataset provided by~\cite{godahewa2021monash}. Our objective is to build a forecaster for a single user. Such models can be used by local grid operators to predict loads from large consumers.

The forecaster is trained to predict hourly electricity consumption $12$ hours ahead using data from the past $84$ hours,
and the training results are presented in Fig.~\ref{fig:res:2}. The results demonstrate a Mean Absolute Percentage Error (MAPE) of $6.53\%$, indicating strong forecasting accuracy. The efficacy of the \emph{Spacetime} model on other benchmark datasets is depicted in the appendix. 

\begin{figure}[t!]
    \centering
    \includegraphics[width=1\linewidth]{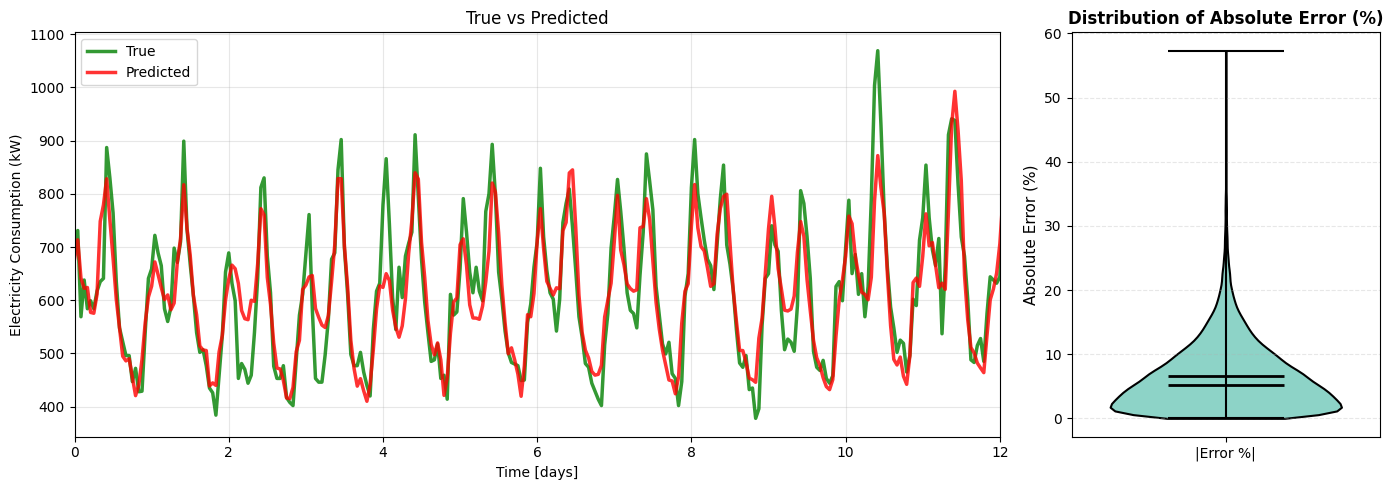}
    \caption{Forecaster performance on test data excerpt (left) and distribution of absolute percentage errors (right). Mean Absolute Percentage Error: $6.53\%$.}
    \vspace{-15pt}
    \label{fig:res:2}
\end{figure}
%
%
%
%
\section{Robust forecaster design}\label{sec:design}
In this section, we present the attack model, a control-theoretic bound on the adversarial error, the adversarial training procedure for solving \eqref{eq:problem}, and experimental validation.

\subsection{Attack Model}
In this paper, we use Projected Gradient Descent \cite{madry2018towards} to generate adversarial attacks against the forecaster, as it is computationally scalable. Additionally, the \emph{Spacetime} forecaster is continuously differentiable, as it is composed of linear state-space operations, GeLU activations, and affine transformations, all of which are smooth. This enables gradient-based attack generation.

Our attack generation method is described in Algorithm~\ref{alg:PGD}. The adversary generates attacks that increase the prediction error by perturbing the input in the direction that maximizes the loss~\eqref{opt:adversary}, determined via the gradient. The attack iteration is terminated when the detection statistic exceeds the threshold $\delta$, so the attacks are stealthy by construction.

Having described the PGD attack, we now assume access to an oracle \emph{GD} that produces optimal attacks against the forecaster model $f(\cdot)$, denoted by $a^{\star} = \text{\emph{GD}}(f,y)$.
%
%
%
%
%
%
%
%
\subsection{Robust forecasters via adversarial training} 
The objective of this paper is to construct a robust forecaster by solving the optimization problem~\eqref{eq:problem}. We achieve this through adversarial training, described in Algorithm~\ref{alg:AT}. The overall approach is as follows: we begin by training a forecaster on the clean dataset, generate optimal attacks against this forecaster using Algorithm~\ref{alg:PGD}, and then fine-tune the model on adversarial inputs with clean targets.

In this paper, we consider two forms of detectors. First, we consider an autoencoder-based detector where the reconstruction error serves as the detection statistic; the function $g(\cdot,\cdot)$ in~\eqref{eq:detector} represents the reconstruction error of the input:
\begin{equation}
    g(\tilde{y}_k, \bar{y}_k) = \|D(E(\tilde{y}_k)) - \tilde{y}_k\|_2,
\end{equation}
where $E(\cdot)$ and $D(\cdot)$ are the CNN encoder and decoder, respectively. Such autoencoder-based detectors are widely adopted in the machine learning community~\cite{yin2020anomaly} and, being data-driven, can be easily integrated into our framework.

Second, we consider an error-norm-based detector where $g(\cdot,\cdot)$ in~\eqref{eq:detector} represents the norm of the prediction error:
\begin{equation}\label{det:power}
    g(\tilde{y}_k, \bar{y}_k) = \|\tilde{y}_k - \bar{y}_k\|_2 = \|e_k\|_2.
\end{equation}
Such detectors are popular in the control theory literature. Since most detectors in control theory are model-based \cite{anand2025feasibility} and assume stationarity, norm-based detectors are among the few model-free alternatives available and have been well studied for power grid applications~\cite{liu2011false}. We next present the experimental results on the benchmark dataset.

\begin{algorithm}[t!]
\caption{GD Attacks with Output Error Constraint}
\label{alg:PGD}
\begin{algorithmic}[1]
\algnotext{EndFor}
\algnotext{EndIf}
\Require Forecaster $f$, input $y$, target $\bar{y}$, step size $\eta$, max iterations $M$, alarm threshold $\delta$, and function $g$ in \eqref{eq:att:pre}
\State Initialize: $a^{(0)} = 0$
\For{$t = 0, 1, \ldots, M-1$}
    \State Compute prediction: $\hat{y}^{(t)} = f(y + a^{(t)})$
    \State Compute gradient: $g^{(t)} = \nabla_a L^{(t)}$, $L^{(t)} = \|\hat{y}^{(t)} - \bar{y}\|_2$
    \State Candidate update: $a^{(t+1)} = a^{(t)} + \eta \cdot g^{(t)}/\|g^{(t)}\|_2$
    \State Update predictions: $\hat{y}^{(t+1)} = f(y + a^{(t+1)})$
    \If{$ g(\hat{y}^{(t+1)}, \bar{y}) > \delta$}
        \State $a^{(t+1)} = a^{(t)}$ and \textbf{break}
    \EndIf
\EndFor
\State \Return $a^{(*)}$
\end{algorithmic}
\end{algorithm}

\begin{algorithm}[t!]
\caption{Adversarial Training}
\label{alg:AT}
\begin{algorithmic}[1]
\algnotext{EndFor}
\Require Clean dataset $\mathcal{I}$, batch size $L$, max iterations $T$
\State Train forecaster on clean data: $f \leftarrow \text{\emph{Spacetime}}(\mathcal{I},\cdot)$
\For{$t = 1, \ldots, T$}
    \State Select $L$ windows $\{y_{j_1}, \ldots, y_{j_L}\}$ from $\mathcal{I}$
    \State Generate attacks: $a_{j_i}^{\star} = \text{\emph{GD}}(f, y_{j_i})$ for $i = 1, \ldots, L$
    \State Fine-tune $f$ on adversarial inputs with clean targets
\EndFor
\State \Return Robust forecaster $f$
\end{algorithmic}
\end{algorithm}
\subsection{Experiments}
In this section, we demonstrate the performance of the robust forecaster on the dataset introduced in Section~\ref{dataset:introduce}.
\subsubsection{Autoencoder-Based Detector}
In this subsection, we use a CNN autoencoder-based detector with an encoding dimension of $2$. The encoder consists of two convolutional layers with filter sizes of $8$ and $16$, respectively, each followed by ReLU activation and max-pooling with a stride of $2$. The convolutional layers use a kernel size of $3$ with padding to preserve spatial dimensions before pooling. The flattened output is then compressed to a $2$-dimensional encoding via a fully connected layer. The decoder mirrors this architecture using transposed convolutions with kernel size $4$ and stride $2$ to upsample the signal back to the original sequence length. The reconstruction error, computed as the mean squared error between the input and reconstructed sequences, serves as the detection statistic. We set a detector threshold of $\delta=0.98$ such that there are $4$ to $5$ false alarms per month.

For attack generation, we use Algorithm~\ref{alg:PGD} and the step size $\eta \approx 10^{-2}$ is chosen so that the threshold is reached during attack generation. For adversarial training, we use Algorithm~\ref{alg:AT} with a batch size of $L = 1000$ ($3\%$ of training data).
After adversarial training, the model is evaluated on $N=50$ different data points (uniformly distributed across a year), and the results are given in Table~\ref{tab:adv:comparison}.
\subsubsection{Norm-Based Detector}
We use a detector of the form \eqref{eq:att:pre}, \eqref{det:power} with $\delta = 400$, such that there are approximately $2$ to $3$ false alarms per year. For attack generation, we use Algorithm~\ref{alg:PGD} and the results are given in Table~\ref{tab:adv:comparison}.

\subsubsection{Discussion}
To compare the detectors, we compute the adversarial MAE per unit attack norm, obtaining $\text{MAE}/\|a\| = 0.35$ for the CNN-based detector and $0.45$ for the norm-based detector. In other words, the adversary achieves greater (normalized) forecasting error against the norm-based detector. Thus the CNN-based detector provides stronger robustness guarantees against stealthy attacks. The fine-tuned model exhibits improved performance on clean data which can be attributed to reduced overfitting. 

We also include a classical input-constrained adversarial baseline, where PGD attacks in Algorithm~\ref{alg:PGD} are clipped to a threshold (instead of constraining the outputs). As shown in Table~\ref{tab:adv:comparison}, the CNN-based detector yields a more robust model ($\approx 10\%$ improvement). This indicates that detector-constrained adversarial training can yield more robust models; however, we note that the attack formulation in these setups are fundamentally different (see Remark~\ref{rem:energy}). Experimental results on other benchmark datasets against a CNN detector are presented in the appendix.

\begin{table}[t]
\centering
\caption{Model Robustness Evaluation across Different Detectors}
\label{tab:adv:comparison}
\resizebox{\columnwidth}{!}{%
\begin{tabular}{lcccc}
\hline
&  {Clean} & Attack & {Adv.} & {\tiny \underline{Adv. MAE}}\\
Model &  {MAE}  & Norm & {MAE} & ${\|a\|}$ \\
\hline
Baseline           & $48.18$   & $-$ & $-$ & $-$ \\ \hline
Fine-tuned (CNN)   & $46.36$   & $634.49$ &$224.86$ & {\cellcolor{green!25}$\mathbf{0.35}$} \\
Fine-tuned (Norm)  & $46.20$   & $236.09$ &$107.87$ & $0.46$ \\
\hline
Fine-tuned (classical)   & $46.35$  & $397.04$& $153.91$ & $0.39$ \\
\hline
\end{tabular}%
}
\vspace{-15pt}
\end{table}

\subsection{A control-theoretic view of \emph{Spacetime} model robustness}
In this section, we quantify the deviation caused by adversarial perturbations and identify which network components contribute most to the prediction error, providing insights for robust forecaster design. We first present the theoretical analysis followed by simple experiments.

\subsubsection{Theory}
While the robust forecaster design in~\eqref{eq:problem} considers detector-constrained adversaries, the following analysis considers a unit-norm input perturbation to characterize the sensitivity of the \emph{Spacetime} model to adversarial inputs and identify which network components amplify vulnerability. We now present the main result of this section, from which we derive key observations.
\begin{proposition}\label{prop:error}
Consider a \emph{Spacetime} model with one \emph{Spacetime} layer in the encoder and the decoder. Suppose the MLP layers act as identity functions. Let $u \in \mathbb{R}^\ell$ be the input vector and $\varepsilon \in \mathbb{R}^\ell$ a perturbation with $\|\varepsilon\|_2 \leq 1$. Then it holds that
\begin{equation}\label{eq:bound:SV}
    \sup_{\|\varepsilon\|_2 \leq 1} \|\tilde{y} - y\|_2 = \sigma_{\max}(H), 
\end{equation}
and the optimal perturbation vector $\varepsilon^\star$ is the right singular vector corresponding to $\sigma_{\max}(H)$. Here $y=Hu$ are the forecasts in the absence of perturbations, $\tilde{y} = H (u+\varepsilon)$ are the forecasts under perturbations. It also holds that
\begin{equation}\label{eq:bound}
    \left( 1/\sqrt{h}\right)\|H\|_1 \leq 
    \sup_{\|\varepsilon\|_2 \leq 1} \|\tilde{y} - y\|_2 
    \leq \sqrt{\ell}\|H\|_1,
\end{equation}
where $H$ is the input-output map defined as 
\begin{equation}\label{eq:linear:ST}
H_{i,j} = \bar{C}(\bar{A}+\bar{B}\bar{K})^{i} \sum_{k=j}^{\ell-1} \left( \bar{A}^{\ell-1-k}\bar{B} \, CA^{k-j}B \right), 
\end{equation}
where $i \in \{0,1,\dots,h-1\}$ and $j \in \{0,1,\dots,\ell-1\}$. 
\end{proposition}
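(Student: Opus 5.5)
The proof has three parts. First, I will show that with identity MLPs the model is a linear map $u\mapsto Hu$, with $H$ given by \eqref{eq:linear:ST}. Second, the singular-value identity \eqref{eq:bound:SV} follows from linearity. Third, the two-sided bound \eqref{eq:bound} follows from standard comparisons between the induced $2$-norm and the induced $1$-norm.

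\textbf{Step 1: reduce the model to a linear map.} Since the MLP blocks act as identities and the embeddings and projections are affine, the forecaster reduces to a cascade of linear time-invariant systems. Affine offsets cancel in $\tilde y-y$, so they can be dropped.

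The encoder layer has realization $(A,B,C)$ with zero initial state. Its output at time $k$ is $\sum_{j\le k} CA^{k-j}Bu_j$.

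The decoder layer has realization $(\bar A,\bar B,\bar C)$ and is driven by the encoder output over the look-back window. Its state at the end of the window, $k=\ell-1$, is
\[
x_\ell=\sum_{k=0}^{\ell-1}\bar A^{\ell-1-k}\bar B\sum_{j=0}^{k}CA^{k-j}Bu_j .
\]
Exchanging the order of summation turns this into
\[
x_\ell=\sum_{j=0}^{\ell-1}\Big(\sum_{k=j}^{\ell-1}\bar A^{\ell-1-k}\bar B\,CA^{k-j}B\Big)u_j .
\]

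In the forecast phase the decoder feeds back its own predicted input, $u=\bar K x$, as described in \cite[(15)]{zhang2023effectively}. The state therefore evolves as $x_{\ell+i}=(\bar A+\bar B\bar K)^i x_\ell$, and the $i$-th forecast is $\bar C(\bar A+\bar B\bar K)^i x_\ell$. Collecting the coefficients of each $u_j$ gives exactly \eqref{eq:linear:ST}, so $y=Hu$ and $\tilde y=H(u+\varepsilon)$.

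\textbf{Step 2: the singular-value identity.} By linearity, $\tilde y-y=H\varepsilon$. Hence the supremum over $\|\varepsilon\|_2\le 1$ is the induced $2$-norm $\|H\|_2=\sigma_{\max}(H)$. Using the SVD $H=U\Sigma V^\top$, this value is attained at $\varepsilon^\star=v_1$, the leading right singular vector. This settles \eqref{eq:bound:SV}.

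\textbf{Step 3: the two-sided bound.} Here $H\in\mathbb R^{h\times \ell}$, and I take $\|H\|_1$ to be the induced $1$-norm (maximum column sum). I will use the vector-norm inequalities $\|x\|_2\le\|x\|_1\le\sqrt n\|x\|_2$.

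For the upper bound,
\[
\|H\varepsilon\|_2\le\|H\varepsilon\|_1\le\|H\|_1\|\varepsilon\|_1\le\sqrt{\ell}\,\|H\|_1\|\varepsilon\|_2 .
\]

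For the lower bound, let $j^\ast$ be the column achieving the maximum column sum and take $\varepsilon=e_{j^\ast}$. Then
\[
\|He_{j^\ast}\|_2\ge\tfrac1{\sqrt h}\|He_{j^\ast}\|_1=\tfrac1{\sqrt h}\|H\|_1 .
\]

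\textbf{Main obstacle.} The delicate part is Step 1, because it relies on the precise Spacetime decoder conventions. These include where the window ends, whether the forecast index starts at power $0$ or $1$, and the fact that the closed-loop feedback gain $\bar K$ comes from the input-prediction head. I will fix these conventions so that the index ranges in \eqref{eq:linear:ST} match, with $i\in\{0,\dots,h-1\}$ and the inner sum running over $k\in\{j,\dots,\ell-1\}$. Steps 2 and 3 are routine.
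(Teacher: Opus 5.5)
Your proposal is correct and follows the paper's route. Linearity gives $\tilde y-y=H\varepsilon$, so the supremum is $\sigma_{\max}(H)$, attained at the leading right singular vector; the two-sided bound is the standard comparison $\frac{1}{\sqrt{m}}\|M\|_1\le\|M\|_2\le\sqrt{n}\|M\|_1$ with $(m,n)=(h,\ell)$. The only difference is that you derive $y=Hu$ explicitly (summation exchange plus the closed-loop recursion $x_{\ell+i}=(\bar A+\bar B\bar K)^i x_\ell$) and prove the norm comparison, whereas the paper asserts the former from the model structure and cites the latter.
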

\begin{proof}
The encoder is characterized by state-space matrices $(A,B,C)$ with $A\in \mathbb{R}^{n_e \times n_e}$, while the decoder has 
matrices $(\bar{A},\bar{B},\begin{bmatrix}\bar{C}\\\bar{K}\end{bmatrix})$ with $\bar{A} \in \mathbb{R}^{n_d \times n_d}$. Here, $\bar{K}$ denotes the feedback matrix that predicts the encoded inputs. The relation $y=Hu$ follows from the structure of the \emph{Spacetime} model under the stated assumptions. Since $\|\tilde{y} - y\|_2 = \|H\varepsilon\|_2$ follows from the linearity of $H$, the exact supremum $\sup_{\|\varepsilon\|_2 \leq 1}\|H\varepsilon\|_2 = \sigma_{\max}(H)$ follows from the definition of the spectral norm, with the optimal $\varepsilon^\star$ given by the corresponding right singular vector. The bounds in~\eqref{eq:bound} follow from the inequality $\frac{1}{\sqrt{m}}\|M\|_1 \leq \|M\|_2 \leq \sqrt{n}\|M\|_1$ for $M \in \mathbb{R}^{m \times n}$. This concludes the proof.
\end{proof}

Although \eqref{eq:bound:SV} characterizes the exact adversarial error and the optimal attack vector, it does not reveal how individual components of the network contribute to vulnerability. To this end, note that $\|H\|_1$ can be reformulated as
\begin{equation}
\scalebox{0.9}{$\displaystyle\max_{j \in \{0,\dots,\ell-1\}} \left\{ \sum_{i=0}^{h-1} \left\vert \bar{C}(\bar{A}+\bar{B}\bar{K})^{i} \sum_{k=j}^{\ell-1} \left( \bar{A}^{\ell-1-k}\bar{B} \, CA^{k-j}B \right) \right\vert \right\}.$}
\end{equation} 
We now make several important observations.

\textbf{Observation 1 (Open-loop instability amplifies long-lag errors):} If the encoder matrix $A$ or the open-loop decoder matrix $\bar{A}$ is unstable (i.e., $\rho(A) > 1$ or $\rho(\bar{A}) > 1$), then terms involving $A^{\ell-1}$ or $\bar{A}^{\ell-1}$ can dominate $\|H\|_1$. For long input sequences (large $\ell$), these terms grow exponentially, causing the adversarial error bound to increase exponentially with $\ell$. Thus, open-loop stability of both encoder and decoder is critical for robustness when using long look-back sequences.

\textbf{Observation 2 (Closed-loop instability amplifies long-horizon errors):} If the closed-loop decoder matrix $(\bar{A}+\bar{B}\bar{K})$ is unstable (i.e., $\rho(\bar{A}+\bar{B}\bar{K}) > 1$), then $\|(\bar{A}+\bar{B}\bar{K})^{i}\|$ grows exponentially with $i$. For long prediction horizons, the terms with large $i$ dominate $\|H\|_1$, causing the adversarial error bound to increase exponentially with the forecast horizon $h$. Thus, closed-loop stability is critical for robustness in long-horizon forecasting.

\textbf{Observation 3 (Decoder dimension-dependent scaling):} 
The matrix $H$ map can be written as $H = H_1 H_2$, where
\begin{equation}
\scalebox{0.9}{$H_1 = \begin{bmatrix}
\bar{C}^\top & 
\left(\bar{C} \left( \bar{A}+\bar{B}\bar{K} \right)\right)^\top & 
\dots &
\left(\bar{C}\left(\bar{A}+\bar{B}\bar{K} \right)^{h-1}\right)^\top
\end{bmatrix}^\top,$}
\end{equation}
and $H_2 \in \mathbb{R}^{n_d \times \ell}$ is a matrix whose columns are given by
$(H_{2})_{:,j} = \displaystyle \sum_{k=j}^{\ell-1} \bar{A}^{\ell-1-k}\bar{B} \, CA^{k-j}B$.
Using submultiplicativity of the spectral norm and the inequality $\|H_2\|_2 \leq \sqrt{n_d} \|H_2\|_{\infty}$, we obtain
\begin{align}\label{eq:bound:O4}
\scalebox{0.95}{$\|H\|_2 \leq \|H_1\|_2 \|H_2\|_2 \leq \|H_1\|_2 \sqrt{n_d} \|H_2\|_{\infty}.$}
\end{align}
Thus, the decoder state dimension plays a non-trivial role in the model's sensitivity to adversarial perturbations; however, this bound is conservative 
and may not be tight in practice. Finally, note that the adversarial analysis extends naturally to other deep SSMs, and is not 
exclusive to \emph{Spacetime}.

\subsubsection{Experimental Validation}
We validate Observations~1 and~2 experimentally using a simplified linear predictor of the form~\eqref{eq:linear:ST}. The target signal is a noisy sine wave, and the model is trained to minimize the mean squared error.

Our goal is to show that the adversarial error grows with $\ell$ and $h$ when the spectral radius of the encoder matrix $A$ and closed-loop decoder matrix $\bar{A}+\bar{B}\bar{K}$, respectively, are near or above unity. Since the spectral radius cannot be fixed prior to training, we train separate models for different values of $\ell$ and $h$ and observe the spectral radii post-training. When varying $h$ (with $\ell$ fixed), the encoder spectral radius and the closed-loop decoder spectral radius remains in the range $[1.0378, 1.0614]$ and $[0.9978, 1.0027]$, respectively. Similarly, when varying $\ell$ (with $h$ fixed), the encoder spectral radius and the closed-loop decoder spectral radius remains in the range $[0.8742, 0.9274]$ and $[1.0921, 1.1631]$, respectively. This confirms that the spectral radii remain approximately constant across models, ensuring they are not confounding variables. For each model, we construct input constrained attacks using PGD and plot the adversarial error in Fig.~\ref{fig:obs}. As shown, the adversarial error increases monotonically in both cases, consistent with Observations~1 and~2.

We also observe errors of $4.04$, $4.10$, and $4.11$ for $n_d = 2, 3, 6$, respectively, consistent with the trend predicted by Observation~3, though the bound remains conservative. Finally, we note that in this experiment (with $\ell=3$ and $H=10$), the trained encoder is found to possess an unstable zero of magnitude $1.39$, consistent with Remark~\ref{rem:zero_dynamics}.

\begin{figure}
    \centering
    \includegraphics[width=8cm]{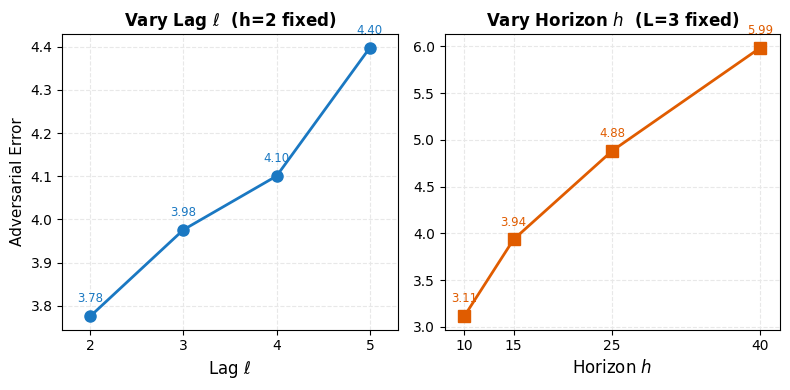}
    \vspace{-10pt}
    \caption{Adversarial error as a function of $\ell$ (left) and $h$ (right), with approximately constant spectral radius across models in both experiments.}
    \vspace{-15pt}
    \label{fig:obs}
\end{figure}
\section{Model-free attacks}\label{sec:DDA}
In the previous sections, we assumed that the adversary has access to the forecaster model to construct attacks. In this section, we show that even without access to the forecaster, an adversary can construct stealthy attacks using only data against a norm-based detector.
\subsection{Data-driven attacks (DDAs): Theory}
Before presenting the main result, we recap some notation. Let us denote the forecaster input as $y_i$, the target as $y_o$, and the predicted output as $\hat{y}_o = f(y_i)$. Let the prediction error be $e = \hat{y}_o - y_o$, and an alarm is raised when $\|e\|_2 > \delta$, where $\delta$ can be tuned to enforce \eqref{det:power}. As \emph{Spacetime} exhibits locally linear behavior due to their linear state-space operations and smooth activations, we can reasonably assume the forecaster approximately preserves the norm ratio $\gamma = \frac{\|\hat{y}_o\|_2}{\|y_i\|_2}$ and directional alignment $\beta = \frac{\langle \hat{y}_o, y_o \rangle}{ \|\hat{y}_o\|_2 \|y_o\|_2}$ under small input perturbations. Then we have the following result.

\begin{theorem}\label{thm:DDA}
Let the attack vector $a$ be designed such that:
\begin{equation}\label{eq:DDA}
    \|\tilde{y}\|_2 = \gamma^{-1} \left( \beta \|y_o\|_2 \pm \sqrt{\|y_o\|_2^2 (\beta^2 - 1) + \mu}\right)
\end{equation}
\begin{equation}
\text{with}\; \gamma = \frac{\|\hat{y}_o\|_2}{\|y_i\|_2}, \quad \beta = \frac{\langle \hat{y}_o, y_o \rangle}{\|\hat{y}_o\|_2 \|y_o\|_2}, \quad \mu = (\delta - s)^2,
\end{equation}
where $\hat{y}_o = f(y_i)$ and $s > 0$ is a slack variable. Then the attack $a$ is stealthy with $\|e\|_2 \leq \delta$.
\end{theorem}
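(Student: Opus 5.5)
The plan is to turn the stealthiness condition $\|e\|_2\le\delta$ into a scalar quadratic inequality in the single unknown $r\triangleq\|\tilde{y}\|_2$, using the two local-linearity assumptions stated before the theorem. Under attack the forecaster receives $\tilde{y}=y_i+a$ and outputs $\hat{\tilde{y}}_o=f(\tilde{y})$. The assumption that the norm ratio is preserved gives $\|\hat{\tilde{y}}_o\|_2=\gamma\|\tilde{y}\|_2=\gamma r$. The assumption that the alignment is preserved gives $\langle \hat{\tilde{y}}_o,y_o\rangle=\beta\,\gamma r\,\|y_o\|_2$.

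Next I expand the residual:
\begin{equation}
\|e\|_2^2=\|\hat{\tilde{y}}_o-y_o\|_2^2=\gamma^2r^2-2\beta\gamma r\|y_o\|_2+\|y_o\|_2^2 .
\end{equation}
This is a quadratic in $t\triangleq\gamma r$, namely $q(t)=t^2-2\beta\|y_o\|_2\,t+\|y_o\|_2^2$.

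I then impose $q(t)=\mu=(\delta-s)^2$ and solve:
\begin{equation}
t=\beta\|y_o\|_2\pm\sqrt{\beta^2\|y_o\|_2^2-\|y_o\|_2^2+\mu}.
\end{equation}
Dividing by $\gamma$ recovers \eqref{eq:DDA} exactly. Hence any attack with $\|\tilde{y}\|_2$ as in \eqref{eq:DDA} gives $\|e\|_2=|\delta-s|$. For $0<s\le\delta$ this is at most $\delta$, so no alarm is raised under \eqref{det:power} and the attack is stealthy. A direction achieving the prescribed norm always exists, e.g.\ $a=(r/\|y_i\|_2-1)\,y_i$, which scales the input.

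The main obstacle is well-posedness rather than the algebra. The square root requires $\mu\ge(1-\beta^2)\|y_o\|_2^2$, i.e.\ $\delta-s$ must be at least the distance from $y_o$ to the ray through $\hat{y}_o$. For the nonnegative root I also need the right-hand side of \eqref{eq:DDA} to be nonnegative, which requires $\gamma>0$. I will therefore state these as implicit feasibility conditions, including the choice $s\in(0,\delta]$, and note that the ``$\pm$'' branch with negative value is discarded. I will also stress that the equalities for $\|\hat{\tilde{y}}_o\|_2$ and the inner product rest on the local-linearity hypothesis, so the conclusion holds to the same approximation. The slack $s$ is what absorbs small violations of $\gamma$ and $\beta$ invariance, so I would remark that a margin $s$ exceeding the resulting perturbation of $\|e\|_2$ keeps $\|e\|_2\le\delta$ exactly.
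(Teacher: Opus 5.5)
Your proposal is correct and follows the paper's argument: expand $\|e\|_2^2$, substitute the preserved alignment $\beta$ and norm ratio $\gamma$, and solve the resulting quadratic in $\gamma\|\tilde{y}\|_2$, relying on the same local-linearity approximation. Your additions are refinements of the same proof, not a different route: you run the implication in the stated direction, you require $s\le\delta$ so that $|\delta-s|\le\delta$, and you flag feasibility of the square root (which the paper only discusses after the theorem).
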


\begin{proof}
A sufficient condition for the attack to be stealthy is $\|e\|_2 = \delta - s$, where $s > 0$ is a slack term. Let us reformulate $\|e\|_2 = \delta - s$ as:
\begin{align}
    \|e\|_2^2 = \|\hat{y}_o\|_2^2 + \|y_o\|_2^2 - 2\langle \hat{y}_o, y_o \rangle = (\delta - s)^2 &= \mu\\
    \implies \;\|\hat{y}_o\|_2^2 - 2\|\hat{y}_o\|_2 \|y_o\|_2 \beta + \|y_o\|_2^2 - \mu &= 0 \label{eq:1}
\end{align}
where we used $\langle \hat{y}_o, y_o \rangle = \beta \|\hat{y}_o\|_2 \|y_o\|_2$. Solving for $\|\hat{y}_o\|_2$ from \eqref{eq:1} using the quadratic formula gives:
\begin{equation}
    \|\hat{y}_o\|_2 = \beta \|y_o\|_2 \pm \sqrt{\|y_o\|_2^2 (\beta^2 - 1) + \mu}
\end{equation}
Given the forecaster approximately preserves the norm ratio $\gamma$, it follows that the attacked input should satisfy \eqref{eq:DDA}.
\end{proof}

Theorem~\ref{thm:DDA} states that for a locally linear model, the attack vector can be explicitly designed to satisfy \eqref{eq:DDA}. Note that to design attacks using \eqref{eq:DDA}, we do not need access to the forecaster model. We only require the forecaster input $y_i$, the alarm threshold $\delta$, and the target $y_o$, which are the same information available in Algorithm~\ref{alg:PGD} except for the model itself. Also, the stealthiness guarantee in Theorem~\ref{thm:DDA} is agnostic to the adversary's objective and the  holds for any attack direction. Theorem~\ref{thm:DDA} only requires estimates of the local gain $\gamma$ and the alignment coefficient $\beta$. We next discuss how to obtain these values in a data-driven fashion.

For a well-trained forecaster with small prediction error $\|y_o - \hat{y}_o\|_2$, the triangle inequality gives $|\|\hat{y}_o\|_2 - \|y_o\|_2| \leq \|y_o - \hat{y}_o\|_2$, implying $\|\hat{y}_o\|_2 \approx \|y_o\|_2$. Therefore, the adversary can approximate $\gamma \approx \|y_o\|_2 / \|y_i\|_2$ using only the target data. Similarly, for a well-trained forecaster, we have $\hat{y}_o \approx y_o$, which implies:
$\beta = \frac{\langle \hat{y}_o, y_o \rangle}{\|\hat{y}_o\|_2 \|y_o\|_2} \approx \frac{\langle y_o, y_o \rangle}{\|y_o\|_2^2} = 1.$
The adversary can thus use the approximation $\beta \approx 1$, or choose a conservative lower bound to account for prediction inaccuracies. Alternatively, $\beta$ can be estimated from a few queries to the forecaster if limited access is available. Thus, DDA described in Theorem~\ref{thm:DDA} requires no gradient information, making it practical even in black-box scenarios.

We note that an attack is feasible only if $\|y_o\|_2^2 (\beta^2 - 1) + \mu \geq 0$ to ensure the square root in \eqref{eq:DDA} is real. Since $\beta \leq 1$, the condition simplifies to $(\delta - s)^2 \geq \|y_o\|_2^2 (1 - \beta^2) \geq 0$. This is satisfied when $\delta$ is sufficiently large relative to $\|y_o\|_2$ or when $\beta$ is close to $1$ (good forecaster). We next demonstrate the efficacy of the proposed DDA method on the electricity consumption benchmark described previously. We also demonstrate the efficacy of the DDAs on other benchmark datasets in the appendix, confirming that the approximations $\gamma \approx \|y_o\|_2/\|y_i\|_2$ and $\beta \approx 1$ are not overly conservative and generalize across diverse time series domains.
\subsection{Data-driven attacks: Experiments}
Let us consider the electricity consumption dataset described in Section~\ref{dataset:introduce}. The attack vector is constructed in the normalized input direction ($y_i$), scaled to satisfy the bound in \eqref{eq:DDA}. We use $\gamma$ estimated from data as explained previously and we use $\beta = 0.9907$. The results are presented in Fig.~\ref{fig:DDA}, which shows the distribution of error over $50$ samples. We observe that the DDAs achieve an error in the range of PGDs with small and moderate normalized step sizes ($\eta=10^{-5}$ and $\eta=10^{-2}$), without any gradient computation. While PGD requires careful step size selection and model access, DDA achieves competitive error without any gradient computation, making it significantly more efficient.

We finally note that a well-trained \emph{Spacetime} forecaster satisfies the condition under which Theorem~\ref{thm:DDA} guarantees a stealthy attack. This reveals an inherent tension: the locally linear structure that makes SSM-based forecasters accurate predictors also makes them susceptible to model-free attacks.

\begin{figure}
    \centering
    \includegraphics[width=5cm]{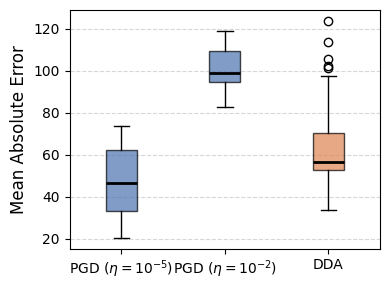}
    \vspace{-10pt}
    \caption{Adversarial error caused by PGD attacks, and data-driven attacks.}
    \vspace{-15pt}
    \label{fig:DDA}
\end{figure}

\begin{table*}
\centering
\caption{Robustness evaluation across Monash benchmark datasets in \cite{godahewa2021monash}}
\label{tab:adv:datasets}
\scalebox{0.85}{
\begin{tabular}{||c||c|c|c|c||c|c||c|c|c|c||}
\hline
\;& \multicolumn{2}{c|}{\textbf{Baseline (CNN Det.)}} & \multicolumn{2}{c||}{\textbf{Fine-tuned (CNN Det.)}} & \multicolumn{2}{c||}{\textbf{Baseline (Norm Det.)}} & \multicolumn{4}{c||}{\textbf{Detector setup}} 
\\
\hline
\; & {Clean} & {Adv.} & {Clean} & {Adv.} & {PGD} & {DDA} & \multicolumn{2}{c|}{Threshold} & {Encoder} & {Encoder}\\
\textbf{Dataset} & {MAE} & {MAE} & {MAE} & {MAE} & {MAE} & {MAE} & $\delta_{\rm norm}$ & $\delta_{\rm CNN}$ & {layers} & {dim.} \\
\hline
S.F. Traffic & {\cellcolor{green!25}$0.01206$} & $0.06922$ & $0.01274$ & {\cellcolor{green!25}$0.06549$} & $0.0692$ & {\cellcolor{green!25}$0.2607$} & $1$ & $1.8$ & $4/8$ & $6$ \\ \hline
River Flow & $10.62$ & $12.06$ & {\cellcolor{green!25}$7.35$} & {\cellcolor{green!25}$10.10$} & $8.89$ & {\cellcolor{green!25}$12.43$} & $10$ & $0.9$ & $8/16$ & $2$\\ \hline
U.S. Births & $379.1$ & $380.3$ & {\cellcolor{green!25}$366.1$} & {\cellcolor{green!25}$367.2$} & $379.5$ & {\cellcolor{green!25}$922.4$} & $4500$ & $0.8$ & $4/8$ & $3$ \\
\hline
\end{tabular}
}
\vspace{-10pt}
\end{table*}

\section{Conclusions}\label{sec:con}
In this paper, we studied the adversarial robustness of the \emph{Spacetime} model. We formulated a robust optimization problem against worst-case stealthy adversaries, solved via adversarial training. We further characterized the dependence of forecasting error on model parameters, providing insights for robust forecaster design. Finally, we demonstrated on benchmark datasets, that attacks can be easily constructed without knowledge of the forecaster, underscoring the vulnerability of SSM-based forecasters. 
Future work includes extending the framework to study targeted adversarial attacks where the adversary steers the forecaster towards a specific false prediction rather than simply maximizing the MSE.
\bibliographystyle{ieeetr}
\bibliography{References}
\appendix
\section{}
The results obtained on three additional Monash benchmark datasets~\cite{godahewa2021monash} are presented in Table~\ref{tab:adv:datasets}. 
Fine-tuning is performed with around $1\%$ of attacked data. The clean (adversarial) MAE represents the forecasting performance in the absence of attacks (under PGD attacks in Algorithm~\ref{alg:PGD}). For all models, the evaluation is done with $N=50$ samples, $\ell=84$, and $h=12$. The DDA MAE represents the forecasting error under data-driven attacks where we use $\beta=0.9907$ across all datasets, against a norm-based detector with threshold $\delta_{\rm norm}$, and $\delta_{\rm CNN}$ denotes the threshold for the CNN-based detector. The encoder architecture is described by the number of layers and encoding dimension.

The highlighted columns in Table~\ref{tab:adv:datasets} summarize the key findings. On the left, adversarial fine-tuning consistently reduces the adversarial MAE, demonstrating improved robustness. In some datasets, fine-tuning also improves the clean MAE, which can be attributed to a regularization effect that reduces overfitting. Here, to ensure a fair comparison between the baseline and fine-tuned models, the attack norm $\|a\|$ is identical, ensuring that the reported improvements in adversarial MAE reflect genuine robustness gains rather than differences in attack strength. On the right, the highlighted DDA MAE entries show that model-free attacks can induce significant forecasting errors compared to PGD. The adversarial errors under the norm-based detector are higher than those under the CNN detector, as the thresholds $\delta_{\rm norm}$ and $\delta_{\rm CNN}$ are not tuned to yield equal FAR.
\end{document}